\documentclass[11pt]{article}
\usepackage[final]{acl}
\usepackage{times}
\usepackage{latexsym}
\usepackage[T1]{fontenc}
\usepackage[utf8]{inputenc}
\usepackage{microtype}
\usepackage{amsmath,amssymb,amsthm}
\usepackage{booktabs}
\usepackage{graphicx}
\newcommand{\Rord}{R_{\mathrm{ord}}}
\newcommand{\bflow}{b_{\mathrm{flow}}}
\newcommand{\IDCG}{\operatorname{IDCG}}
\newcommand{\kapeps}{\kappa_{\varepsilon}}
\newtheorem{theorem}{Theorem}
\newtheorem{proposition}{Proposition}
\title{Exact Risk Certificates for Pairwise-Judged Reranking}
\author{
  \textbf{Xinyu Wang}$^{1,*}$ \quad \textbf{Jingrui Tian}$^{1,*}$ \quad \textbf{Sicheng Lyu}$^{1}$ \quad \textbf{Yu Gu}$^{1}$ \\
  \textbf{Ziyang Song}$^{2}$ \quad\textbf{Xiao-Wen Chang}$^{1}$\quad   \textbf{Lei Ding}$^{3,\dagger}$ \\
  { $^{1}$McGill University \quad $^{2}$Ohio University \quad $^{3}$University of Manitoba} \\
  {\textrm{\{xinyu.wang5, jingrui.tian\}@mail.mcgill.ca}} \\
  {\textrm{lei.ding@umanitoba.ca}}
}
\begin{document}
\maketitle
\begingroup
\renewcommand{\thefootnote}{\fnsymbol{footnote}}
\footnotetext[1]{Equal contribution.}
\footnotetext[2]{Corresponding author.}
\endgroup
\begin{abstract}
Pairwise relevance judgments constrain both the loss numerator and the ideal-ranking denominator of nDCG. We characterize the exact worst-case loss of replacing a ranking with a fixed proposal. Binary gains attain this loss and provide an inspectable witness. Threshold verification reduces to at most $K+1$ maximum-closure problems. Our exposure-flow ablation isolates denominator coupling: on shared judge trajectories, exact verification saves 2.34 comparisons per TREC-DL development query and 1.98 per confirmation query at a budget of 32. Savings transfer to BEIR and a second proposer. Predictive gates achieve higher deployment gains, while certificates expose the worst gains still permitted by accepted comparisons.
Our code is available at \url{https://github.com/W4ng77/certified-reranking-code}.
\end{abstract}

\section{Introduction}
LLM rerankers improve retrieval through relevance comparisons \citep{qin-etal-2024-large,luo-etal-2024-prp}. Average gains can coexist with query-level losses \citep{dincer2014hypothesis,wang2012robust}. Before installing a fixed proposal, a system can buy comparisons to certify that its nDCG loss stays within tolerance.

Comparisons constrain both the loss numerator and its ideal-ranking denominator. If a document losing exposure is relevant, any document judged at least as relevant must also contribute to the ideal ranking. Our exposure-flow ablation discards closure-induced denominator growth. It can therefore continue asking after the comparison evidence already suffices.

\begin{figure*}[t]
\centering
\includegraphics[width=\textwidth]{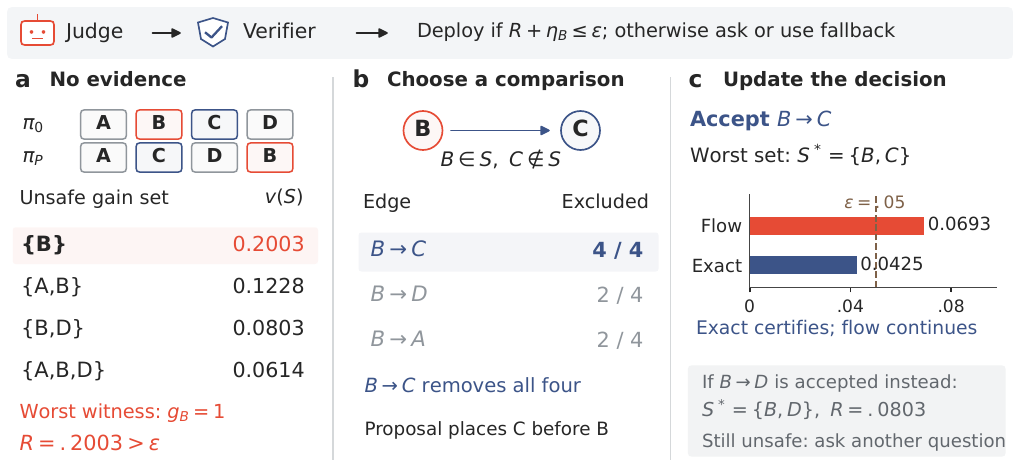}
\caption{\textbf{A witness explains the next question.} For $K=4$, $\varepsilon=.05$, and $w\approx(1,.6309,.5,.4307)$, a set assigns gain one to its members and zero elsewhere, with $v(S)=a(S)/Z_{\min(|S|,K)}$. (a) Four sets remain unsafe without evidence. (b) $B\to C$ excludes all four; the other shown assertions exclude two. (c) Accepting it yields worst set $\{B,C\}$: exact certifies while flow continues. The gray branch shows the next unsafe witness after $B\to D$. $B\to A$ is illustrative: $a_A=0$, so it is outside the purchased support pairs.}
\label{fig:coupling}
\end{figure*}

We contribute an exact comparison-conditioned nDCG risk with inspectable binary witnesses, a threshold test using at most $K+1$ maximum-closure solves, and a shared-trajectory ablation measuring the evidence value of denominator coupling. Our top-10 experiments enumerate unsafe sets for question selection.

\section{Comparison-Conditioned Risk}
\label{sec:exact}
Let $\pi_0$ be the incumbent and $\pi_P$ a fixed proposal. Their top-$K$ union is $C$, with $N=|C|\le2K$. For discount $w_j=1/\log_2(j+1)$, exposure $x_\pi(d)$ is $w_j$ at position $j\le K$ and zero otherwise. Set
\begin{equation}
 a_d=x_{\pi_0}(d)-x_{\pi_P}(d),\quad
 Z_m=\sum_{j=1}^{m}w_j.
\end{equation}
Both rankings have total exposure $Z_K$, so $a(C)=\sum_{d\in C}a_d=0$. For non-negative gains $g$ on $C$, let $\IDCG_K(g)=\sum_{j\le K}w_jg_{[j]}$, with gains sorted decreasingly and padded with zeros. Ranking utility is nDCG: $U(\pi;g)=\sum_d x_\pi(d)g_d/\IDCG_K(g)$.

An accepted edge $i\to j$ asserts $g_j\ge g_i$. With edges $E$, define
\begin{align}
\mathcal G(E)&=\{g\ge0:g_j\ge g_i\ \forall(i,j)\in E,\nonumber\\[-2pt]
&\hspace{53pt}\IDCG_K(g)>0\},\\
\Rord(E)&=\sup_{g\in\mathcal G(E)}\frac{a^\top g}{\IDCG_K(g)}.
\label{eq:risk}
\end{align}
The subscript ``ord'' denotes ordinal evidence: weak comparisons without absolute grades or relevance counts. We retain documents with $a_d=0$ because their comparisons can constrain the denominator. When all accepted assertions are correct, the proposal loss $r_P=U(\pi_0)-U(\pi_P)$ is at most $\Rord(E)$. The bound also covers full-corpus nDCG: gains outside $C$ can only increase the denominator, and the supremum is non-negative.

A non-empty $S\subseteq C$ is \emph{$E$-closed} if $i\in S,(i,j)\in E$ implies $j\in S$. Let $\mathcal F(E)$ contain these sets, and define
\begin{equation}
 v(S)=\frac{a(S)}{Z_{\min(|S|,K)}}.
\end{equation}
\begin{theorem}[Exact risk]
\label{thm:exact}
The risk is $\Rord(E)=\max_{S\in\mathcal F(E)}v(S)$, attained by the binary gains $g_d=\mathbf1[d\in S]$ of a maximizing set.
\end{theorem}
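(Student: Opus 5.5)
We prove the two inequalities separately. For the lower bound, take any $S\in\mathcal F(E)$ and the binary gains $g=\mathbf 1_S$. Closure of $S$ under $E$ gives $g_j\ge g_i$ for every $(i,j)\in E$: if $i\in S$ then $j\in S$, and otherwise $g_i=0$. Since $S\neq\emptyset$, $\IDCG_K(g)=Z_{\min(|S|,K)}>0$, so $g\in\mathcal G(E)$. Moreover $a^\top g/\IDCG_K(g)=v(S)$. Hence $\Rord(E)\ge\max_S v(S)$, and any maximizing $S$ attains the supremum once the upper bound is shown.

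For the upper bound we use a layer-cake decomposition. Fix $g\in\mathcal G(E)$ and let its distinct positive values be $t_1>\dots>t_m>0$, with $t_{m+1}=0$. Define the level sets $S_\ell=\{d:g_d\ge t_\ell\}$. Each $S_\ell$ is non-empty and $E$-closed, because $g_j\ge g_i\ge t_\ell$ whenever $i\in S_\ell$ and $(i,j)\in E$. We then write
\[
g=\sum_{\ell=1}^m (t_\ell-t_{\ell+1})\,\mathbf 1_{S_\ell},
\]
with non-negative coefficients $\lambda_\ell=t_\ell-t_{\ell+1}$. The numerator is linear, so $a^\top g=\sum_\ell\lambda_\ell a(S_\ell)$.

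The key step is that the denominator also splits exactly along these level sets:
\[
\IDCG_K(g)=\sum_\ell\lambda_\ell Z_{\min(|S_\ell|,K)}.
\]
This holds because the sets $S_\ell$ are nested in the order of decreasing gain. The sorted vector $g_{[\cdot]}$ is the sum of the sorted indicators, and each $\mathbf 1_{S_\ell}$ sorted is the indicator of the first $|S_\ell|$ positions, so both sides agree coordinatewise after sorting. Applying the $K$-truncated weighted sum then gives the identity. In general $\IDCG_K$ is only subadditive (it is a sum of the top-$K$ weights, i.e.\ a max over permutations), and nestedness is what yields equality here. This is where denominator coupling enters, and I expect it to be the main point needing care.

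With both expansions in hand, $a^\top g/\IDCG_K(g)$ is a ratio of sums $\sum\lambda_\ell a(S_\ell)\big/\sum\lambda_\ell Z_{\min(|S_\ell|,K)}$ with positive denominator terms. By the mediant inequality it is at most $\max_\ell a(S_\ell)/Z_{\min(|S_\ell|,K)}\le\max_{S\in\mathcal F(E)}v(S)$. Taking the supremum over $g$ gives $\Rord(E)\le\max_S v(S)$. Finally, $\mathcal F(E)$ is finite and non-empty because $C$ itself is closed, so the maximum exists and the supremum is attained by the binary witness.
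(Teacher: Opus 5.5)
Your proposal is correct and is essentially the paper's proof: the paper writes $g_d=\int_0^\infty\mathbf 1[d\in S_t]\,dt$, expands both $a^\top g$ and $\IDCG_K(g)$ as integrals over the closed threshold sets $S_t$, and integrates $a(S_t)\le r\,Z_{\min(|S_t|,K)}$; your finite sum over distinct gain values followed by the mediant inequality is the discrete form of this, and the same closed binary witness gives attainment. Your explicit justification of the denominator identity, using nestedness of the level sets and a common sorting permutation, spells out what the paper states in one line.
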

\paragraph{Proof idea.}
Admissible gains decompose into closed binary threshold sets. Their loss ratio is a weighted average of the binary losses, so a threshold set attains the maximum. Appendix~\ref{app:proof} proves the graded-gain and full-corpus result.

Since $C$ is closed and $a(C)=0$, the maximum is non-negative. If $\Rord(E)>\varepsilon$, its maximizing set supplies gains consistent with every comparison yet exceeding the loss tolerance. A sound rule for this information model therefore cannot certify that proposal. Additional absolute relevance information can change the feasible set.

Certification can finish before the relevance order is known. Unresolved comparisons do not matter once every remaining gain assignment stays within tolerance. When certification fails, the witness identifies what is missing: any comparison contradicting that assignment removes it. The same step can then remove other unsafe witnesses.

\paragraph{Why flow can be loose.}
For $E_{\rm sup}\subseteq I\times J$, from exposure losers $I=\{i:a_i>0\}$ to exposure gainers $J=\{j:a_j<0\}$, let $F_{\rm sup}$ be maximum flow with capacities $a_i$ and $-a_j$, and $L=\sum_{i\in I}a_i$. Let $s_{(r)}$ be the positive $a_i$ in descending order, padded with zeros. The no-evidence risk is
\begin{equation}
 R_0=\max_{1\le m\le K}\frac{\sum_{r\le m}s_{(r)}}{Z_m},
\end{equation}
For $E=E_{\rm sup}$, maximum closure \citep{picard1976closure} gives $\max_{S\in\mathcal F(E)}a(S)=L-F_{\rm sup}$. Hence
\begin{equation}
 \frac{L-F_{\rm sup}}{Z_K}\le\Rord(E)\le
 \underbrace{\min\{R_0,L-F_{\rm sup}\}}_{\bflow(E)}.
\label{eq:sandwich}
\end{equation}
The upper bound uses $Z_{\min(|S|,K)}\ge1$, discarding closure-induced denominator growth. The lower bound is specific to the support graph; it need not hold after adding other edges.

Figure~\ref{fig:coupling} uses $K=4$ and a proposal moving $B$ from second to fourth. Accepting $B\to C$ excludes all four unsafe gain sets. The new maximizing set $\{B,C\}$ gives
\begin{equation}
 \Rord(E)=\frac{w_3-w_4}{w_1+w_2}=.0425.
\end{equation}
At $\varepsilon=.05$, exact certifies while flow remains at .0693. Accepting $B\to D$ instead leaves an unsafe witness with risk .0803.

\paragraph{Computation.}
Let $M_E(q)$ be maximum closed-set weight, including the empty set. The concave denominator $f(m)=Z_{\min(m,K)}$ is the lower envelope $\min_\ell(c_\ell+d_\ell m)$, where \mbox{$d_\ell=w_{\ell+1}$} and \mbox{$c_\ell=Z_\ell-\ell d_\ell$} for $0\le\ell\le K$, with $Z_0=w_{K+1}=0$. Each line touches $f$ and lies above it. For $\lambda\ge0$,
\begin{align}
 \Phi(\lambda)&=\max_{S\text{ closed}}[a(S)-\lambda f(|S|)]\nonumber\\
 &=\max_\ell\{M_E(a-\lambda d_\ell\mathbf1)-\lambda c_\ell\}.
\label{eq:closure-test}
\end{align}
Each inner problem is maximum closure. Thus $K+1$ min-cuts test $\Rord(E)\le\lambda$ without subset enumeration. Appendix~\ref{app:implementation} gives the construction. The experimental cover planner retains enumeration, keeping at most 20,000 unsafe sets for question selection; risk evaluation uses all sets.

\paragraph{From counterexamples to questions.}
An edge $i\to j$ excludes an unsafe set $S$ when $i\in S$ and $j\notin S$. We greedily cover unsafe sets with pairs and purchase one using a fixed proposer-score prior. The experiment retains only accepted planned edges; other answers still cost budget. On budget exhaustion, it uses a verified fallback, which can always keep the incumbent. Appendix~\ref{app:protocol} gives the planner and calibration details.

\section{Evaluation}
\label{sec:eval}
\paragraph{Protocol.}
BM25 supplies the incumbent, Qwen3-Reranker-0.6B the proposal, and Qwen3-32B the judge \citep{qwen3}. We use $K=10$, $\varepsilon=.05$, and primary comparison budget $B=32$. Development uses TREC-DL 2020--2022 (61 calibration / 122 test queries); confirmation uses TREC-DL 2023 (27/55). Transfer uses three BEIR corpora \citep{thakur2021beir}; BGE-reranker-v2-m3 supplies a second proposal on confirmation.

Each paid comparison uses two prompt orientations. Mean support at .50 accepts the planned weak assertion; tie probability supports that direction. Rejections cost budget and add no edge. We replay exact and flow gates on identical accepted evidence and calibrate each separately for imperfect judgments using a trajectory-wise split-conformal correction \citep{vovk2005algorithmic}. For a gate's bound $b_t$ at prefix $t$, the calibration score is
\begin{equation}
 S_B(q)=\max_{0\le t\le B}[r_P(q)-b_t(q)]_+.
\end{equation}
We use the finite-sample corrected $(1-\alpha)$ quantile $\eta_B$, with $\alpha=.10$. A release installs the proposal at the first prefix with $b_t+\eta_B\le\varepsilon$; otherwise it uses a verified fallback. Under exchangeability and a frozen policy, coverage holds across queries and all stopping prefixes. Development selects the acceptance rule and has no split-conformal guarantee; confirmation freezes it (Appendix~\ref{app:protocol}).

\begin{figure*}[t]
\centering
\includegraphics[width=\textwidth]{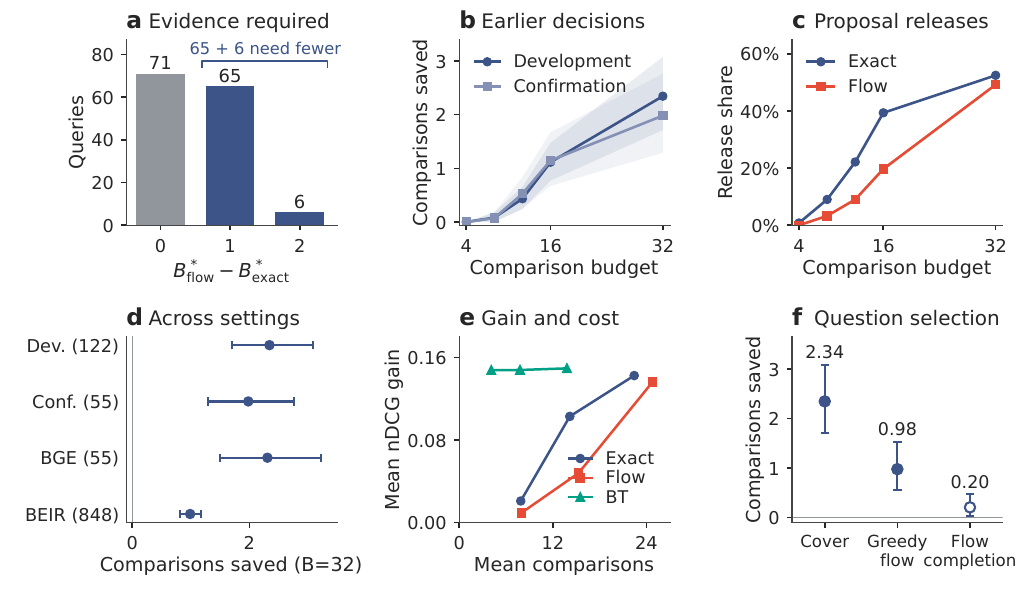}
\caption{\textbf{Exact verification reduces evidence cost.} (a) Optimal hindsight savings on 142 cases with correct comparisons. (b) Development and confirmation stopping. (c) Development release shares. (d) Savings at $B=32$ across four settings. (e) Development gain and cost at budgets 8, 16, and 32. (f) Savings depend on acquisition; the hollow point has Holm-adjusted $p\ge.05$. Shading and error bars show paired 95\% bootstrap intervals.}
\label{fig:experiments}
\end{figure*}

\paragraph{Less waiting on the same evidence.}
Cost is $\min(T,B)$, where $T$ is the first certifying prefix. On development, exact saves 2.34 comparisons per query, with paired 95\% bootstrap CI $[1.71,3.07]$, against flow's 24.77 (9.46\%; Fig.~\ref{fig:experiments}b). Among the 118 queries with the same final decision under both gates, it saves 1.88, CI $[1.42,2.40]$. Savings include unchanged outcomes.

Figure~\ref{fig:experiments}c shows where the saving arises. At $B=16$, exact releases 48/122 proposals, versus 24/122 for flow; at $B=32$, the counts approach 64/122 and 60/122. The main gain is earlier decisions, with similar final release counts. Their corrections are zero, so Eq.~\ref{eq:sandwich} determines stopping order. A zero empirical quantile does not establish judge accuracy; different corrections can reverse the order.

Savings extend to confirmation and transfer (Fig.~\ref{fig:experiments}d): 1.98 comparisons on confirmation, 2.31 with the second proposer, and .99 on pooled BEIR. All four cost tests survive Holm correction. Each BEIR corpus contributes savings of .533 on SciFact, .986 on NFCorpus, and 1.199 on FiQA. Gain differences are less consistent: development gains .0550 nDCG at $B=16$, but the confirmation and second-proposer gain tests do not survive correction. NFCorpus and FiQA also show more observed violations (Appendix~\ref{app:results}).

Acquisition affects its magnitude. Development savings at $B=32$ are 2.34 under cover, .98 under greedy flow, and .20 under flow completion; the last contrast is not significant after correction. Completion already acquires evidence tailored to the looser bound. These measurements establish comparison savings, without measuring end-to-end latency. Verifier construction averages 5.27\,ms and an accepted-edge update 1.20\,ms; cover planning averages 154\,ms per call.

Cover excludes unsafe normalized gain sets, allowing an edge to change both numerator and denominator. Flow completion targets the exposure-transfer deficit. This helps explain the savings.

\paragraph{Evidence sufficiency and deployment scope.}
With complete correct support comparisons, exact certifies 146 of 183 development proposals, versus 142 for flow. Among the 142 common cases, it needs fewer edges for 71 and more for none (Fig.~\ref{fig:experiments}a). These hindsight optima count correct evidence, excluding unsuccessful purchases. Even the complete weak order leaves 22 actually safe proposals uncertifiable: their ordinal evidence still admits unsafe gain magnitudes. See Appendix~\ref{app:evidence} for the optimization.

A separately calibrated Bradley--Terry (BT) predictor \citep{bradley1952rank} achieves greater deployment gain (Fig.~\ref{fig:experiments}e). With greedy-flow acquisition on development, BT gains .1593 nDCG using 13.4 comparisons; exact gains .1363 using 23.1. With cover on confirmation, BT gains .1291 versus .0964 for exact. BT uses probabilities from the same paid judgments and a score prior. The certificate instead exposes the worst gain assignment allowed by accepted assertions. Full baselines appear in Appendix~\ref{app:results}. Always deploying exceeds $\varepsilon$ on 8/122 development and 1/55 confirmation queries. Full development has nine tolerated losses and fourteen over-budget proposals (Appendix~\ref{app:results}).

A search team can require a checkable loss bound before publishing a reranking update. Reviewers can verify it from accepted comparisons. A failed certificate identifies permitted gains and guides the next comparison.

\section{Related Work}
Minimal test collections use absolute relevance judgments to determine the sign of a system's AP difference \citep{carterette2006minimal}; preference-based evaluation selects pairs to reduce assessment effort \citep{carterette2008here}. MED bounds ranking differences over unknown relevance \citep{tan2015med,moffat2018med}. Our verifier bounds a fixed proposal's normalized loss using weak inequalities. The level-set argument specializes Lov\'{a}sz-extension ratio principles \citep{bach2013submodular,hein2011beyond}; the $K+1$-cut construction uses maximum closure \citep{picard1976closure,gallo1989parametric}. LLM assessors can predict relevance preferences, with reliability depending on judgment design \citep{thomas2024llm,faggioli2023perspectives}. Risk-sensitive optimization trains robust rankings \citep{wang2012robust}; we verify proposed rankings.

\section{Conclusion}
Weak comparisons admit an exact nDCG loss certificate and a polynomial threshold test. Denominator coupling reduces evidence cost on shared judge trajectories. Predictive gates achieve higher gains. Binary witnesses expose the remaining uncertainty and give a concrete target for the next comparison.
\label{main:end}
\clearpage

\section*{Limitations}
The guarantee is relative to weak comparisons, unknown non-negative gains, and a fixed proposal. Specified grade alphabets, absolute judgments, or gain bounds can restrict this model. Merely requiring a positive gain in $C$ adds no information: admissibility already requires positive ideal DCG. Correctness of accepted assertions is required for the direct certificate; the calibrated deployment guarantee instead requires exchangeability and a frozen pipeline. Neither statement provides per-query protection from arbitrary judge errors. Appendix~\ref{app:errors} characterizes a bounded-error extension.

Threshold verification is polynomial, but the experimental cover planner enumerates unsafe subsets. That planner is exponential and practical here at $K=10$. Both implementations use floating point without interval certificates; the closure algorithm has not been evaluated on large retrieval panels. Comparison counts exclude model batching, token lengths, and acquisition overhead, so they do not establish latency or energy savings.

The cover-trajectory analysis is exploratory. Development reuse limits its calibration interpretation, and the 55-query confirmation test set contains only one genuinely over-budget proposal. Pooled BEIR calibration offers a marginal guarantee over the pool, without separate coverage guarantees for each corpus. The results support comparison savings; they do not establish better deployment safety than predictors.

The full-table diagnostic uses a quantized judge on a separate 183-query panel. It measures assertion errors and decision tolerance without a calibration/test split. Official qrels supply the evaluation target; incompleteness matters particularly for unfiltered BEIR candidates. These restrictions limit conclusions about human relevance, other judges, and deployment distributions.

\section*{Ethical Considerations}
Relevance certification can affect which documents users encounter. The loss criterion is defined by benchmark relevance and does not measure factuality, harmful content, fairness, or source quality. LLM comparisons may reproduce judge biases, and small nDCG loss does not imply protection on these other dimensions. We use existing retrieval benchmarks without collecting new participant data or deploying the system to users. Any application should interpret the certificate with its evidence assumptions and evaluate consequences beyond retrieval effectiveness.

\appendix

\section{Proof of the Exact Risk Characterization}
\label{app:proof}
\begin{proof}[Proof of Theorem~\ref{thm:exact}]
For an admissible gain vector $g$, define $S_t=\{d\in C:g_d\ge t\}$ for $t>0$. If $i\in S_t$ and $(i,j)\in E$, then $g_j\ge g_i\ge t$, so $j\in S_t$. Thus every non-empty threshold set belongs to $\mathcal F(E)$.

The identity $g_d=\int_0^\infty\mathbf1[d\in S_t]dt$ yields
\begin{equation}
a^\top g=\int_0^\infty a(S_t)\,dt.
\end{equation}
At level $t$, precisely $|S_t|$ gains reach that level. Summing the discounts of the largest $K$ gains therefore gives
\begin{equation}
\IDCG_K(g)=\int_0^\infty Z_{\min(|S_t|,K)}\,dt,
\end{equation}
where the contribution of the empty set is zero. Let $r=\max_{S\in\mathcal F(E)}v(S)$. For every non-empty threshold set, $a(S_t)\le rZ_{\min(|S_t|,K)}$. Integrating and dividing by the positive ideal DCG proves $a^\top g/\IDCG_K(g)\le r$.

There are finitely many closed sets and $C$ is one, so a maximizer $S^*$ exists. Set $g_d=\mathbf1[d\in S^*]$. Closure ensures that every comparison holds. This vector has numerator $a(S^*)$ and denominator $Z_{\min(|S^*|,K)}>0$, attaining $r$. Taking the supremum proves the equality.
\end{proof}

\paragraph{Full-corpus evaluation.}
The set $C$ is closed and $a(C)=0$, so $r\ge0$. Gains outside $C$ do not affect the loss numerator and can only increase ideal DCG. If the numerator is non-negative, full-corpus normalization can therefore only lower the loss ratio. If it is negative, the ratio is already below $r$. When all gains inside $C$ are zero, the numerator is zero. Hence the conditional upper bound remains valid for full-corpus nDCG whenever its denominator is positive.

\paragraph{Gain alphabets and witnesses.}
The upper-bound argument covers all non-negative graded gains. Any finite allowed gain alphabet containing zero and a common positive value also attains the same maximum: scaling the binary witness by that value preserves its ratio. If $r>\varepsilon$, this witness is a feasible counterexample, so a rule valid for every gain vector in the stated information model cannot certify the proposal. This conclusion does not apply after additional absolute relevance information restricts that model.

\section{Implementation and the Flow Bound}
\label{app:implementation}
The implementation retains all documents in the top-$K$ union, including those with zero exposure difference. For each subset, a recursion stores its weight sum and cardinality. The no-evidence array contains $v(S)$ for every non-empty subset; an accepted edge $i\to j$ invalidates precisely those subsets with $i\in S$ and $j\notin S$. The maximum over remaining entries is the exact risk. Repeating an already accepted edge leaves the risk unchanged.

For the support graph, the maximum-closure reduction uses source capacities $a_i$ for positive weights, sink capacities $-a_j$ for negative weights, and implication capacities greater than $L$. A finite cut associated with a closed set $S$ has capacity $L-a(S)$ \citep{picard1976closure}. This establishes $\max_S a(S)=L-F_{\rm sup}$. Here $F_{\rm sup}$ uses only accepted edges in $I\times J$. Its denominator lies between 1 and $Z_K$. In addition, removing gain constraints cannot lower the supremum, giving $\Rord(E)\le R_0$. With no edges, an optimal positive-loss subset contains only the largest positive weights, of which there are at most $K$.

The self-check suite reproduces the worked example and checks the no-evidence identity, monotonicity after adding edges, the flow sandwich, and maximum closure on 400 random instances. Its worst sandwich slack is $2.2\times10^{-16}$. The randomized checks exercise implementation consistency; the proof of Theorem~\ref{thm:exact} establishes the mathematical result. The full non-empty set $C$ remains feasible, ensuring a zero lower bound even when every other surviving set has negative weight.

\paragraph{Polynomial threshold verification.}
Set $Z_0=0$ and $w_{K+1}=0$. For $\ell=0,\ldots,K$, define $d_\ell=w_{\ell+1}$ and $c_\ell=Z_\ell-\ell d_\ell$. Since the increments of $f(m)=Z_{\min(m,K)}$ decrease, every line $c_\ell+d_\ell m$ lies above $f$ and touches it at $m=\ell$; the line at $\ell=K$ equals $Z_K$ for all $m\ge K$. Consequently their lower envelope equals $f$ at every integer cardinality.

\begin{proposition}[Threshold oracle]
For any accepted comparison graph $E$ and $\lambda\ge0$, Eq.~\ref{eq:closure-test} holds when the empty set is included. Its value is zero if $\Rord(E)\le\lambda$, and positive otherwise. A positive maximizing set is an unsafe binary witness. Each test uses at most $K+1$ minimum cuts on $N+2$ vertices and $O(N+|E|)$ arcs.
\end{proposition}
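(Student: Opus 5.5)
The proof splits into three parts: the envelope identity that gives Eq.~\ref{eq:closure-test}, the sign characterization of $\Phi(\lambda)$, and the min-cut construction.

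\textbf{Envelope identity.} We use the envelope fact just established: for every integer $m\ge0$ we have $f(m)=\min_{0\le\ell\le K}(c_\ell+d_\ell m)$, where $f(0)=Z_0=0$. With the empty set included, $f(|\emptyset|)=0$ and $a(\emptyset)=0$. Since $\lambda\ge0$,
\[
a(S)-\lambda f(|S|)=\max_\ell\bigl[a(S)-\lambda d_\ell|S|-\lambda c_\ell\bigr].
\]
This uses $-\lambda\min=\max(-\lambda\,\cdot)$, which needs $\lambda\ge0$. Next, $\lambda d_\ell|S|=\sum_{d\in S}\lambda d_\ell$, so the bracket equals $(a-\lambda d_\ell\mathbf1)(S)-\lambda c_\ell$. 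We maximize over closed $S$ (including $\emptyset$) and swap the two maxima, both of which are over finite sets. This gives $\Phi(\lambda)=\max_\ell\{M_E(a-\lambda d_\ell\mathbf1)-\lambda c_\ell\}$.

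\textbf{Sign characterization.} The empty set contributes $0$, so $\Phi(\lambda)\ge0$. For non-empty closed $S$ we have $f(|S|)=Z_{\min(|S|,K)}\ge w_1=1>0$. Hence $a(S)-\lambda f(|S|)>0$ if and only if $v(S)>\lambda$. By Theorem~\ref{thm:exact}, $\Rord(E)=\max_{S\in\mathcal F(E)}v(S)$. Therefore $\Phi(\lambda)>0$ if and only if some closed non-empty $S$ has $v(S)>\lambda$, which happens if and only if $\Rord(E)>\lambda$. Otherwise $\Phi(\lambda)=0$. A maximizing set with positive value is non-empty and closed, with $v(S)>\lambda$. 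By Theorem~\ref{thm:exact}, its indicator $g=\mathbf1[\cdot\in S]$ lies in $\mathcal G(E)$ and has ratio $v(S)>\lambda$, so it is an unsafe binary witness. To recover the witness, we take the $\ell$ attaining the outer max and the closed set attaining $M_E$ at that $\ell$. This set attains $\Phi$ because of the swap of maxima. Any set achieving the inner value also satisfies $a(S)-\lambda f(|S|)\ge a(S)-\lambda(c_\ell+d_\ell|S|)=\Phi(\lambda)>0$, since $c_\ell+d_\ell|S|\ge f(|S|)$.

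\textbf{Min-cut construction.} This is the step I expect to need the most care. For weights $b=a-\lambda d_\ell\mathbf1$, we use Picard's reduction, as in the flow-bound paragraph. Take vertices $C\cup\{s,t\}$, which is $N+2$ vertices. Add an arc $s\to d$ with capacity $b_d$ when $b_d>0$, an arc $d\to t$ with capacity $-b_d$ when $b_d<0$, and an arc $i\to j$ with capacity $>\sum_d b_d^+$ (or infinite) for each $(i,j)\in E$. Finite cuts $(\{s\}\cup S,\cdot)$ correspond exactly to closed $S$, because a cut containing $i$ but not $j$ would sever an infinite arc. Such a cut has capacity $\sum_d b_d^+-b(S)$, so $M_E(b)=\sum_d b_d^+-\text{mincut}$. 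The empty set corresponds to the cut $\{s\}$, which matches $M_E$ including $\emptyset$. The arc count is at most $2N+|E|$, which is $O(N+|E|)$. There is one cut per $\ell\in\{0,\dots,K\}$, giving at most $K+1$ cuts. Indices with equal lines can be skipped, which is why the bound is "at most."

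The subtle points are these. The envelope must be exact at $m=0$ and at $m\ge K$; this needs $Z_0=0$ and $w_{K+1}=0$, so that $c_K=Z_K$ and $d_K=0$. The condition $\lambda\ge0$ is needed for the min–max swap. The positive-value optimum must be non-empty so that Theorem~\ref{thm:exact} applies.
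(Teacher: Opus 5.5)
Your proposal is correct and follows the paper's proof essentially step for step: the pointwise envelope identity, commuting the two finite maxima, the sign test via $f(|S|)\ge 1$ for non-empty $S$, and Picard's closure cut for each shifted weight vector $a-\lambda d_\ell\mathbf{1}$. Your added details are valid refinements that the paper leaves implicit: $\lambda\ge0$ is what converts $-\lambda\min_\ell$ into a maximum, and the inner closure maximizer at an optimal $\ell$ itself attains $\Phi(\lambda)$. The only wording to tighten is that, with finite implication capacities exceeding $\sum_d b_d^+$, closedness holds for every minimum cut (its capacity is at most that of the cut $\{s\}$), not for all ``finite'' cuts.
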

\begin{proof}
For a fixed $S$, subtracting the lower envelope from $a(S)$ gives
\[
a(S)-\lambda f(|S|)=\max_\ell\{a(S)-\lambda(c_\ell+d_\ell|S|)\}.
\]
The two finite maxima over $S$ and $\ell$ commute, proving the identity. The empty set has objective zero; every non-empty set has $f(|S|)>0$, so positivity is equivalent to a ratio exceeding $\lambda$. For each line, use node weights $a_i-\lambda d_\ell$. Connect positive weights to the source, negative weights to the sink, and each accepted implication $i\to j$ with capacity exceeding the total positive weight. Every minimum cut respects all implications and returns a maximum-weight closed set. Subtract $\lambda c_\ell$ and take the largest value.
\end{proof}

This construction applies to arbitrary comparison graphs, including cycles and edges through zero-exposure documents. The original flow relaxation only uses $E_{\rm sup}$; the new cuts shift every node weight and retain all accepted edges. Thus they solve the exact threshold problem rather than repeating the exposure-transfer relaxation. Closure methods have a long history \citep{picard1976closure,gallo1989parametric}; the affine envelope specializes them to this denominator.

With $H=\sum_i[a_i]_+$, risk lies in $[0,H]$. Bisection gives additive precision $\delta$ using $O((K+1)\log(1+H/\delta))$ cuts. A fixed deployment threshold needs only one oracle call. The reference implementation uses the same floating-point flow routine as the support certificate.

\paragraph{Finite exact-value iteration.}
Start with $\lambda=0$. If $\Phi(\lambda)>0$, let $S$ be its witness and replace $\lambda$ by $v(S)$. The positive residual implies $v(S)>\lambda$, while Theorem~\ref{thm:exact} gives $v(S)\le\Rord(E)$. Each update therefore selects a strictly larger value from a finite family. Iteration terminates in exact arithmetic, and $\Phi(\lambda)=0$ at termination gives $\lambda=\Rord(E)$. This proves finite exact recovery without a polynomial iteration bound. The implementation stops at a residual tolerance; because non-empty denominators are at least one, a residual at most $\tau$ implies $\Rord(E)\le\lambda+\tau$ in exact arithmetic. Floating-point roundoff remains a separate issue. For each fixed line, node weights decrease with $\lambda$, allowing parametric maximum-flow methods to trace the closure breakpoints \citep{gallo1989parametric}; the supplied implementation uses ordinary cuts.

The new oracle was compared with exhaustive closed-set enumeration on 300 random instances with $2\le N\le11$, arbitrary directed edges, and varying $K$. All 1,200 threshold checks agreed within $10^{-9}$; maximum objective error was $9.99\times10^{-16}$. Bisection intervals also covered the enumerated risk. A 100-document, $K=50$ strongly connected test checked the known zero-risk solution. The finite witness-ratio iteration also matched enumeration on these random instances.

The oracle and finite witness-ratio iteration were also checked on all 183 saved real ranking instances, with $11\le N\le20$, $K=10$, and empty evidence. Every objective and risk matched enumeration within $10^{-9}$. These checks exercise real exposure vectors; without implication arcs they do not measure performance on accepted-evidence trajectories. The supplied timing records remain supplementary, and the archived deployment profiles describe the original enumeration implementation.

\paragraph{Planning beyond enumeration.}
An online planner could call the threshold oracle to collect unsafe witnesses and greedily select comparisons that exclude them, adding witnesses as decisions remain unresolved. This avoids materializing the complete unsafe-set family, following the witness-generation principle in Appendix~\ref{app:evidence}. It would change the acquisition policy and require fresh calibration. The reported cover trajectories use exhaustive planning; the proposed witness-sampling planner has not been evaluated.

\begin{table*}[t]
\centering\small
\begin{tabular}{lrrrr}
\toprule
Quantity & mean & median & p95 & max \\
\midrule
verifier construction (ms) & 5.27 & 2.35 & 25.73 & 28.40 \\
per accepted edge, risk update (ms) & 1.20 & 0.50 & 5.58 & 7.07 \\
cover planner, one call (ms) & 154.01 & 176.28 & 213.29 & 224.27 \\
verifier peak memory (MiB) & 11.05 & 5.75 & 46.00 & 46.00 \\
\bottomrule
\end{tabular}

\caption{Recorded computation profile at $K=10$. These component measurements exclude LLM inference and do not establish end-to-end speedups.}
\end{table*}

\section{Acquisition and Calibration}
\label{app:protocol}
\paragraph{Acquisition.}
Counterexample cover greedily covers unsafe sets with unpurchased permitted pairs. It buys the planned pair with the highest frozen acceptance prior,
\[
\sigma\!\left((h_j-h_i)/\max\{\operatorname{IQR}(h),10^{-6}\}\right),
\]
where $h$ denotes proposer scores, $\operatorname{IQR}$ their interquartile range, and $\sigma$ is logistic. Planning deduplicates unsafe sets and retains at most the 20,000 most dangerous ones. Risk evaluation uses the full subset family; planning truncation does not change the certificate. Once the uncorrected risk reaches $\varepsilon$, planning targets zero to acquire calibration margin. Rejections consume budget but grant no edge. We also evaluate greedy flow, minimum-cardinality flow completion, and a static order weighted by capacities.

\paragraph{Common calibration wrapper.}
For a fixed acquisition policy and budget $B$, let $b_t(q)$ be a bound or predicted loss after $t$ purchases. Use the main-text score $S_B$: signed loss $r_P=U(\pi_0)-U(\pi_P)$ is positive for degradation and negative for improvement; $[\cdot]_+$ clips negative residuals to zero.
With $n$ calibration queries, let $\eta_B$ be the $\lceil(n+1)(1-\alpha)\rceil$-th smallest calibration score, or infinity when this rank exceeds $n$. The system installs the proposal at the first $t\le B$ with $b_t+\eta_B\le\varepsilon$. Otherwise it uses a fallback verified to lose at most $\varepsilon$; keeping the incumbent always qualifies. Split-conformal coverage \citep{vovk2005algorithmic} implies, for exchangeable calibration and test scores under this frozen pipeline,
\begin{equation}
\Pr\{[U(\pi_0)-U(\pi_{\mathrm{out}})]_+>\varepsilon\}\le\alpha.
\end{equation}
On $S_B\le\eta_B$, every potential stopping prefix is covered, so adaptive stopping does not invalidate this conclusion. Coverage is marginal over queries. Related risk-control approaches include \citet{angelopoulos2024conformal,angelopoulos2025ltt,xu2025twostage}.

Each gate is calibrated separately, including BT and a score-only predictor. A flow gate's correction need not cover the tighter exact bound. Under an identical correction, Eq.~\ref{eq:sandwich} makes exact stopping no later; with different corrections, stopping order is empirical. The exact and flow corrections are zero across all reported gate, acquisition-policy, and budget configurations, so the observed ordering follows the bound. At $n=27$ and $\alpha=.10$, the corrected quantile is the 26th score. A zero value means that this empirical order statistic is zero; it neither measures assertion accuracy nor guarantees conditional coverage. Smaller $\alpha$ or changed judge noise requires recalibration and can change stopping order. Across 11,128 query--budget checks, exact releases earlier in 835 and later in none. The same queries recur across budgets; these checks share observations.

\paragraph{Judgment rule and fallback.}
Pairs are accepted when mean support across both prompt orientations reaches .50. The deployment judge returns probabilities over first-better (A), second-better (B), tie (T), and uncertain (U). For planned $i\to j$, support is $s_{ij}=\tfrac12(p_1(B)+p_1(T)+p_2(A)+p_2(T))$. Writing $\bar p_X=\tfrac12(p_1(X)+p_2(X))$, reverse support satisfies $s_{ij}+s_{ji}=1-\bar p_U+\bar p_T$. Rejection therefore need not imply reverse support above .50. For three-label A/B/T answers, as in the separate full-table diagnostic, $\bar p_U=0$ and that implication holds. The experimental policy retains only planned support edges. It can discard useful reverse information when reverse support passes the threshold; using such edges would change acquisition and calibration. Tie mass supports weak comparisons but does not assert equality. This rule was selected with the static arm on development calibration queries. Their combined selection and calibration do not satisfy the frozen-pipeline assumption. We apply the guarantee to confirmation, whose settings were frozen before results were opened, subject to exchangeability. Adaptive trajectories were recollected with the selected rule inside the loop: changing acceptance can change later questions. Replaying relaxed acceptance on trajectories collected with a strict rule failed a replay-validity check and is excluded.

The fallback allows at most 30 score-improving swaps or replacements, each checked against the no-evidence risk relative to the original incumbent, with numerical tolerance $10^{-9}$. Thus accepted local edits do not accumulate unchecked loss relative to an evolving reference. BT uses the same paid comparisons, including probability information from rejected edges, with a score prior; it has information beyond the accepted weak assertions used by the certificate.

\section{Minimum Sufficient Evidence}
\label{app:evidence}
For correct support edges $E^*_{\mathrm{sup}}$, choosing the smallest certifying subset is a hitting-set problem:
\begin{align}
 B^*_{\mathrm{ord}}=&\min_{z\in\{0,1\}^{|E^*_{\mathrm{sup}}|}}\sum_e z_e,\\
 &\sum_{\substack{e=(i,j)\in E^*_{\mathrm{sup}}\\i\in S,\ j\notin S}}z_e\ge1
 \quad\forall S:v(S)>\varepsilon.\nonumber
\end{align}
Every unsafe binary configuration must contradict at least one selected comparison. The optimum is zero when $R_0\le\varepsilon$, and infinite if complete support evidence cannot certify. Equation~\ref{eq:sandwich} implies $B^*_{\mathrm{ord}}\le B^*_{\mathrm{flow}}$.

A binary master covers unsafe sets found so far. The exact risk oracle adds a surviving unsafe witness until the master solution certifies. Each optimally solved master yields a lower bound, attained by the first certifying solution. Only optimal solver statuses count as an exact optimum. These are hindsight requirements based on correct evidence, excluding rejected or incorrect purchases.

Complete correct support evidence certifies 146 proposals with exact risk and 142 with flow among 183 development queries. Of the 169 actually within-budget proposals, arbitrary weak comparisons certify one further case. The remaining 22 still admit unsafe gains under the complete weak order. Among the 142 common support cases, 71 require fewer edges under exact risk and none require more. The exact requirement has median nine and maximum eleven, versus a flow maximum of twelve.

At $B=8$, greedy-flow acquisition certifies 6.6\% with flow and 10.4\% with exact risk; cover with exact risk reaches 19.7\%. At $B=32$, cover reaches the support-interface ceiling, $146/183=79.8\%$. Completion and greedy flow reach 78.7\% and 78.1\%, with mean budget-truncated costs 15.3 and 16.5, versus cover's 15.5. Completion can cost less than cover despite certifying fewer proposals.

\section{Expanded Empirical Results}
\label{app:results}
\begin{table}[t]
\centering\small\setlength{\tabcolsep}{5pt}
\begin{tabular}{lrrrr}
\toprule
Panel & $n$ & $r_P\le0$ & $0<r_P\le\varepsilon$ & $r_P>\varepsilon$\\
\midrule
Development & 183 & 160 & 9 & 14\\
\bottomrule
\end{tabular}
\caption{Signed proposal losses on the complete TREC-DL 2020--2022 development panel, including its 61 calibration and 122 test queries; $\varepsilon=.05$. Positive loss means a worse proposal. The non-positive group includes 152 improvements and eight ties.}
\label{tab:proposal-risk}
\end{table}

Table~\ref{tab:proposal-risk} separates tolerated degradation from exceeding the loss budget. The counts use the same full development proposals as the evidence-ceiling analysis. They characterize the proposal distribution, without conditioning on a gate's acceptance.

TREC-DL 2020--2022 provides 183 development queries with a 61/122 calibration/test split; TREC-DL 2023 provides 82 confirmation queries with a 27/55 split \citep{craswell2020trecdl,craswell2021trecdl,craswell2022trecdl}. TREC candidate pools contain judged documents. BEIR pools are unfiltered; unjudged candidates receive relevance zero. Gains are $2^{\max(\mathrm{rel},0)}-1$. We use $\alpha=.10$ and budgets $0,8,16,32$, supplemented by $4,12$. BEIR calibration is pooled across SciFact, NFCorpus, and FiQA. The second proposer shares confirmation queries and is not an independent query panel.

Three greedy-flow contrasts were registered before confirmation results were opened; none of their $B=32$ mean-gain tests reached corrected significance. Cover-trajectory results are secondary analyses. Reported intervals use paired bootstrap resampling; tests use Holm correction within each contrast family.

\begin{table*}[t]
\centering\small
\begin{tabular}{llrrrrr}
\toprule
Panel & $n$ & $B$ & \multicolumn{2}{c}{$\Delta$nDCG} & \multicolumn{2}{c}{comparisons} \\
\cmidrule(lr){4-5}\cmidrule(lr){6-7}
 & & & diff. & $p_{\mathrm{Holm}}$ & diff. & $p_{\mathrm{Holm}}$ \\
\midrule
Development & 122 & 12 & +0.0278 & 0.0015 & -0.43 & 0.0023 \\
 &  & 16 & +0.0550 & 0.0015 & -1.11 & 0.0015 \\
 &  & 32 & +0.0063 & 1.000 & -2.34 & 0.0015 \\
\addlinespace
Confirmation & 55 & 12 & +0.0223 & 0.124 & -0.53 & 0.0096 \\
 &  & 16 & +0.0211 & 0.422 & -1.15 & 0.0010 \\
 &  & 32 & +0.0000 & 1.000 & -1.98 & 0.0010 \\
\addlinespace
Second proposer & 55 & 12 & +0.0380 & 0.099 & -0.47 & 0.019 \\
 &  & 16 & +0.0337 & 0.187 & -1.22 & 0.0015 \\
 &  & 32 & +0.0047 & 1.000 & -2.31 & 0.0015 \\
\addlinespace
BEIR & 848 & 12 & +0.0090 & 0.0010 & -0.15 & 0.0010 \\
 &  & 16 & +0.0151 & 0.0010 & -0.35 & 0.0010 \\
 &  & 32 & +0.0074 & 0.0022 & -0.99 & 0.0010 \\
\bottomrule
\end{tabular}

\caption{Exact minus flow gates on shared cover trajectories. Positive gain differences and negative comparison differences favor exact. Tests are corrected within the archived contrast families; these are secondary analyses.}
\end{table*}
\begin{table*}[t]
\centering\small
\begin{tabular}{llrrrrrr}
\toprule
Corpus & Gate & $n$ & $\Delta$nDCG & accept & viol. & cmp & saved (CI) \\
\midrule
SciFact & flow & 199 & +0.0343 & 0.136 & 2/27 & 29.8 &  \\
 & exact &  & +0.0361 & 0.146 & 2/29 & 29.2 & +0.533 $[+0.312,\,+0.784]$ \\
\addlinespace
NFCorpus & flow & 216 & +0.0243 & 0.139 & 2/30 & 29.7 &  \\
 & exact &  & +0.0325 & 0.194 & 5/42 & 28.7 & +0.986 $[+0.639,\,+1.389]$ \\
\addlinespace
FiQA & flow & 433 & +0.0577 & 0.242 & 2/105 & 29.0 &  \\
 & exact &  & +0.0673 & 0.282 & 4/122 & 27.8 & +1.199 $[+0.945,\,+1.485]$ \\
\bottomrule
\end{tabular}

\caption{Per-corpus BEIR results at $B=32$ on shared cover trajectories. ``Viol.'' counts loss-budget exceedances among accepted proposals; the calibration guarantee is marginal over the pooled query distribution.}
\end{table*}
\begin{table*}[t]
\centering\small
\begin{tabular}{@{}lrrrr@{}}
\toprule
 & \multicolumn{2}{c}{Exact certificate} & \multicolumn{2}{c}{BT / reference} \\
\cmidrule(lr){2-3}\cmidrule(l){4-5}
Decision rule & $\Delta$nDCG & Viol./acc. & $\Delta$nDCG & Viol./acc. \\
\midrule
\multicolumn{5}{@{}l}{\textit{Development, $n=122$}} \\[1pt]
Counterexample cover & +0.1423 & 1/64 & +0.1493 & 4/78 \\
Greedy flow & +0.1363 & 1/60 & +0.1593 & 4/84 \\
Flow completion & +0.1399 & 1/62 & +0.1570 & 4/81 \\
Capacity static & +0.0162 & 1/12 & +0.1400 & 2/64 \\
\addlinespace[2pt]
Score-only predictor & -- & -- & +0.1091 & 5/56 \\
Always deploy & -- & -- & +0.1981 & 8/122 \\
\midrule
\multicolumn{5}{@{}l}{\textit{Confirmation, $n=55$}} \\[1pt]
Counterexample cover & +0.0964 & 0/24 & +0.1291 & 0/43 \\
Greedy flow & +0.0882 & 0/23 & +0.1385 & 0/45 \\
\addlinespace[2pt]
Score-only predictor & -- & -- & +0.0990 & 0/37 \\
Always deploy & -- & -- & +0.1542 & 1/55 \\
\bottomrule
\end{tabular}

\caption{Deployment at $B=32$. Gains average over all test queries; ``Viol./acc.'' counts budget exceedances among accepted proposals. BT uses the same paid comparisons, with probability information beyond accepted assertions. References in the last two rows per panel do not buy comparisons.}
\end{table*}

The development gain difference at $B=16$ under cover is .0550, CI $[.0324,.0805]$. Confirmation and second-proposer pointwise gain intervals exclude zero, but their Holm-adjusted tests are .422 and .187. Primary-budget cost savings persist across panels. Acquisition strongly affects this gain: development comparison differences at $B=32$ are $-2.34$ for cover, $-.98$ for greedy flow, and $-.20$ for completion; the completion test has adjusted $p=.334$.

Predictors release more confirmation proposals. Certificates have fewer observed development violations, but these samples do not establish better safety at a common marginal risk target. Always deploying gains .1981 on development and .1542 on confirmation, with violations on 8/122 and 1/55 queries. These empirical proportions do not provide a conformal guarantee. PRP-Allpair, which constructs a new ranking, uses 135.3 comparisons per development query and gains .2408; its calibrated wrapper deploys every query, with 4.1\% exceeding $\varepsilon$. Its comparison cost is about six times exact cover's 22.4. The fixed-proposal certificate does not dominate the overall quality--cost frontier.

\section{False Assertions and Decision Tolerance}
\label{app:errors}
A non-relevance assertion specifies $g_d=0$; let $Z\subseteq C$ collect these assertions. Count distinct directional comparisons and zero assertions as evidence items. For non-empty $S\subseteq C$, let
\begin{align}
\nu_{E,Z}(S)=&\#\{(i,j)\in E:i\in S,j\notin S\}\nonumber\\
&+\#\{d\in Z:d\in S\}.
\end{align}
With at most $k$ false accepted assertions, the exact full-corpus risk is
\begin{align}
\mathcal V_k&=\{v(S):S\in\mathcal S_k\},\nonumber\\
\mathcal S_k&=\{S: \varnothing\ne S\subseteq C,\ \nu_{E,Z}(S)\le k\},\nonumber\\
R^{(k)}_{\mathrm{mix}}(E,Z)&=\max\bigl(\{0\}\cup\mathcal V_k\bigr),
\end{align}
where the sets in the second family are non-empty subsets of $C$. Delete the assertions violated by an admissible gain vector. Every threshold set then satisfies the remaining assertions and violates at most $k$, so the proof of Theorem~\ref{thm:exact} applies. Conversely, each included binary set violates exactly $\nu_{E,Z}(S)$ items. Zero is attained by placing relevance only outside $C$, assuming the corpus contains such a document. For $k=0$ and $Z=\varnothing$, this reduces to the main theorem.

Define
\begin{equation}
\kapeps(E,Z)=\min_{S:v(S)>\varepsilon}\nu_{E,Z}(S),
\end{equation}
with an empty minimum equal to infinity. Then $R^{(k)}_{\mathrm{mix}}\le\varepsilon$ exactly when $\kapeps\ge k+1$. An actual error count $M<\kapeps$ suffices for a within-budget decision. Failure of this sufficient condition does not imply an actual loss violation. The robust self-checks verify duplicate idempotence, equivalence to deleting up to $k$ assertions, and this duality on randomized instances.

\paragraph{Full-table judge diagnostic.}
On a separate 183-query development panel, Qwen3-32B-AWQ answered every unordered pair in both orientations and one pointwise query per document (52,639 forwards). Prompts were selected on ten design queries. Strict acceptance requires agreement at .80; relaxed acceptance uses mean support at .50. This qrels-based full-table audit has no calibration/test split.

Under relaxed acceptance, 135 queries certify at $k=1$. Of these, 15 satisfy $M\le1$, another 15 satisfy $M<\kapeps$, 100 lack these sufficient guarantees but stay within the observed loss budget, and five exceed it. Across all 183 queries, $M<\kapeps$ covers 30 under strict acceptance and 34 under relaxed acceptance, including some $k=0$-only decisions; none exceeds the budget. Using this offline diagnostic to select decisions would require knowing the true error count.

\section{Experiment Materials}
\label{app:materials}
The code repository provides experiment materials and certificate implementations. Main-text deployment results and the full-table judge diagnostic are separate analyses. The deployment experiments use the calibration and test splits described in Appendix~\ref{app:protocol}; the full-table diagnostic uses all 183 development queries to measure assertion errors and decision tolerance. Its 52,639 forwards include both pair orientations and pointwise relevance judgments. Diagnostic acceptance rules are evaluated separately from the calibrated deployment gates.
\end{document}